\newcommand{\R}{\mathbb{R}}
\newcommand{\spa}{\text{span}}
\newtheorem{lemma}{Lemma}
\title{Change-of-Basis Pruning via Rotational Invariance}
\author{
  Alex Ning\thanks{Equal contribution} \\
  Department of Computer Science\\ 
  University of Virginia\\
  \texttt{rnx2bc@virginia.edu} \\
   \And
  Vainateya Rangaraju\footnotemark[1] \\
  Department of Computer Science\\
  University of Virginia\\
  \texttt{prr3gw@virginia.edu} \\
}
\begin{document}
\maketitle
\begin{abstract}
Structured pruning removes entire neurons or channels, but its effectiveness depends on how importance is distributed across the representation space. Change-of-basis (CoB) pruning addresses this challenge by applying orthogonal linear transformations that concentrate importance within certain dimensions. However, many standard deep learning architectures are not inherently invariant to such transformations. To enable compatibility, we introduce two-subspace radial activations (TSRAs): an activation family that is invariant to orthogonal linear transformations applied independently within its two activation subspaces. This invariance allows CoB transformations to be merged into surrounding weights without incurring extra parameters. We position this work as a proof-of-concept that a rotationally invariant design may offer a principled approach towards change-of-basis pruning. We do not provide an analysis of multiple TSRA candidates nor do we explore weight initialization for any TSRAs. These limitations, combined with other necessary modifications we make to permit rotational invariance, result in a slight accuracy drop of $4.52\%$ compared to a ReLU-based control. However, using activation-magnitude importance, VGG-16 implementing our CoB+TSRA framework shows encouraging results on CIFAR-10. Under fixed-ratio structured pruning, CoB improves accuracy over a TSRA baseline at all pruning ratios and extends reliable pruning frontier from roughly $30\%$ to $70\%$ of parameters \textit{without} post-prune fine tuning. Under threshold-based pruning strategies, CoB prunes $90-96\%$ of parameters while maintaining $1-6\%$ accuracy drop after fine-tuning. Together, these results indicate that rotationally invariant architectures may offer a promising path towards CoB pruning.
\end{abstract}

\section{Background}

\subsection{Model Pruning}

Modern machine learning (ML) models often contain millions of parameters and require substantial computational resources \citep{He_2024}. Model compression addresses this challenge by reducing model size, computational load, and memory usage. Among compression techniques, model pruning has become particularly prominent in recent literature \citep{He_2024}. Pruning removes unnecessary parameters, neurons, or even whole dimensions from a neural network. The goal is to produce a smaller, more efficient model while preserving accuracy and overall performance as much as possible \citep{He_2024}.

Broadly, pruning methods fall into two categories. In unstructured pruning, weights with the smallest absolute values are set to zero \citep{He_2024}. By contrast, structured pruning removes larger, semantically meaningful units (i.e entire neurons, channels, filters, or dimensions)~\citep{He_2024}. A central question in pruning is how to determine what to remove. Classical approaches define importance scores/metrics either directly on the model parameters or on the activations that those parameters produce. Weight-based importance methods rely on statistics such as the L1 or L2 norm of the weight corresponding to each neuron or channel \citep{li2017pruningfiltersefficientconvnets}. Activation-based importance instead looks at the behavior of the network on data, ranking parameters or groups of parameters by importance metrics such as the magnitude or variance of their activations across a set of data samples \citep{li2017pruningfiltersefficientconvnets} \citep{molchanov2019importanceestimationneuralnetwork}. These scoring strategies are then used either layerwise \citep{li2017pruningfiltersefficientconvnets}, where each layer prunes a fixed proportion of its units, or globally \citep{He_2024}, where all units across the network compete in a single ranked list.

In this context, our proposed approach can be viewed as a structured, activation magnitude-based pruning method, with one key distinction: rather than applying importance metrics in the network’s native coordinate system, we compute an orthogonal linear transformation (a "change-of-basis" or "rotation"\footnote{We note that strictly speaking, \textit{half} of all orthogonal linear transformations are \textit{not} rotations since they have a determinant = $-1$. Throughout this paper, we commit a slight abuse of terminology by using "rotation" to refer to \textit{all} orthogonal linear transformations. Although not strictly accurate, we find this perspective sometimes assists intuition.}) that concentrates importance along certain component axes. This is enabled via modifications to the neural network hidden layer which make it invariant to the applied transformations. Pruning is then performed in this rotated basis, enabling cleaner and more principled removal of entire dimensions.

\begin{figure}[H]
    \centering
    \setlength{\fboxsep}{0pt}
    \includegraphics[width=\textwidth]{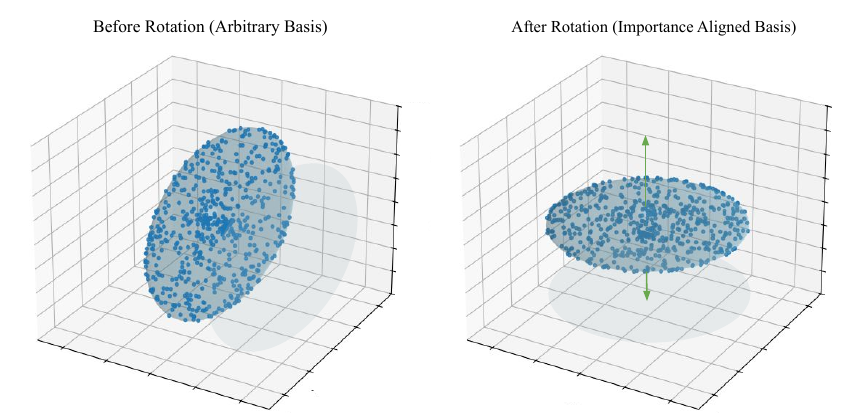}
    \caption{Diagram conceptually demonstrating the proposed benefit of change-of-basis pruning, which respects hidden state geometry. On the left: model activations depicted in the original basis. Here neurons/original basis components contribute roughly equally and it might be quite unclear what to prune. On the right: same model activations depicted under a change-of-basis. Here the feature space is reordered such that dimension 1 and 2 captures the most variance, and dimension 3 very little. Pruning is clearly most feasible along the green axis. 
  }
    \label{fig:intuition_cob_pruning}
\end{figure}

\subsection{Related Works}

Recent work in the pruning of Transformer-based~\citep{vaswani2023attentionneed} language models have explored inserting orthogonal linear transformations into the transformer architecture to allow more favorable pruning. By merging some of the inserted linear transformations into existing parameters where possible, these methods concentrate an importance metric into \textit{some} parameters while reducing them in others, enabling a greater proportion of parameters to be pruned. SliceGPT~\citep{ashkboos2024slicegptcompresslargelanguage} utilizes a PCA-based approach to find the change-of-basis transformation which concentrates a L2-norm metric within the outputs of each transformer layer and then applies structured pruning. In contrast, RotPruner~\citep{chen2025rotpruner} and DenoiseRotator~\citep{gu2025denoiserotatorenhancepruningrobustness} use different importance metrics and learned rotations which can be applied towards unstructured, semi-structured, and/or structured pruning. While these methods have achieved substantive success in pruning and have demonstrated the viability of pruning in a rotated space, a limitation also exists in that not all of the inserted linear transformations can be merged into neighboring parameters, resulting in some amount of additional parameter count and compute usage.

\cite{ganev2023universalapproximationmodelcompression} introduced radial neural networks, which utilize non-element-wise radial rescaling activations. These activations rescale a given input vector dependent only on the norm of that vector. That is, given a function $f: \R \to \R$, a vector $x \in \R^d$, and $|\cdot |$ denoting the Euclidean norm, then a radial rescaling activation $\phi$ would be of form $\phi(x) = f(|x|)x$. \cite{ganev2023universalapproximationmodelcompression} proved universal approximation theorem for bounded-width radial neural networks. Additionally, they demonstrate a property of \textit{lossless compression} for radial neural networks and find an algorithm which, given a radial neural network, can find an \textit{exactly} equivalent smaller network.

\section{Two-Subspace Radial Activations}
\label{sec:tsras}

The property of lossless compression of radial neural networks permits an algorithmic method for reducing model size~\citep{ganev2023universalapproximationmodelcompression}. Equivalently, hidden layers of radial neural networks \textit{saturate} at a maximum width, above which additional dimensions do not add any expressivity. This is unlike in element-wise activation functions, where additional layer width increases representational capacity with no theoretical bound. As a result, although radial neural networks \textit{with} bounded width do satisfy a universal approximation theorem, the inability to rapidly increase model complexity via increased hidden layer dimensionality may result in practical limitations.

We begin by formally characterizing the saturated width limitations of radial neural networks.

Let $d_{i-1}$ be the input size of the $i$-th hidden layer of a radial neural network and let $d_i$ be its output size. Denote $W \in \R^{d_i \times d_{i-1}}$ its weight matrix and $b \in \R^{d_i}$ its optional bias. Given scalar function $f: \R \to \R$, let $\phi(x) = f(|x|)x$ be the radial rescaling activation for $x \in \R^{d_i}$.

Let $x_\text{in} \in \R^{d_{i-1}}$ be the input to the $i$-th hidden layer. If a bias is not used, let $x_\text{pre} = Wx_\text{in} \in \R^{d_i}$ be the pre-activation of the hidden layer. Otherwise, denote $x_\text{pre} = Wx_\text{in} + b$ as the pre-activation.

Although $x_\text{pre} \in \R^{d_i}$, the image of $W$ spans a subspace of at most $d_{i-1}$ dimensions. Thus, if no bias is used, $x_\text{pre}$ is restricted to at maximum a $d_{i-1}$-dimensional subspace of the $d_i$-dimensional space: $\dim(\spa (\{ Wx_\text{in} : x_\text{in} \in \R^{d_{i-1}}\})) \leq d_{i-1}$. If a bias is used, then the span of all possible pre-activations occupies a subspace at most one dimension larger: $\dim(\spa (\{ Wx_\text{in} : x_\text{in} \in \R^{d_{i-1}}\} \cup \{b\})) \leq d_{i-1} + 1$.

Denote $x_\text{act} = \phi(x_\text{pre})$ be the activation of the $i$-th hidden layer. Because $\phi$ only \textit{rescales} the vector $x_\text{pre}$, $x_\text{act}$ still sits on the line spanned by $x_\text{pre}$. As a result, given $S$ as the span of all possible pre-activations $x_\text{pre}$, $\phi(S) \subseteq S$, meaning that the activations of the hidden layer are restricted to the same subspace (the "occupied" subspace) of $\R^{d_i}$ that the pre-activations are. This enables the \textit{lossless compression} property of radial neural networks, as the rotational invariance of the radial rescaling activations permits a change-of-basis to be performed on $W$ which aligns the occupied subspace with some of the output coordinate axes. The matrix $W$ can then be pruned in a lossless manner by removing the coordinate axes orthogonal to the occupied subspace. Eg. the pruned matrix $W' \in \R^{d_{i-1} \times d_{i-1}}$ in the case of no bias and $W' \in \R^{d_{i-1} + 1 \times d_{i-1}}$ in the case of bias. As a result, without a bias, the capacity of radial neural networks does not benefit from \textit{any} width expansion beyond the width of the input layer. With a bias, radial neural networks can only expand width at a rate of $1$ dimension every hidden layer without exceeding width saturation.

Modern deep neural networks often take advantage of far more dramatic expansions in hidden layer width in order to increase representational capacity. As a result, we introduce \textit{two-subspace} radial activations (TSRAs), which maintain complete rotational invariance within each of two \textit{separate} subspaces while not being bound to the same restrictive width saturation limitations of radial neural networks.

Given an orthogonal decomposition for $\R^d = U \oplus V$, a vector $x \in \R^d$, the decomposition of $x$ into $x_U \in U$ and $x_V \in V$ such that $x = x_U + x_V$, and functions $f_U, f_V: \R^2 \to \R$, a TSRA takes the form:
\begin{equation}\label{eq:tsra}
    \sigma(x) = f_U(|x_U|, |x_V|)x_U + f_V(|x_U|, |x_V|)x_V
\end{equation}

Essentially, the TSRA rescales two separate subspaces of a vector $x$ using two separate scaling functions. Each function takes as input only the two norms of the subspace components of the vector. As a result, rotational invariance is maintained \textit{within} each subspace independently.

We prove in Appendix~\ref{sec:tsra_width_saturation_proof} that TSRAs have strictly greater width saturation than radial rescaling functions, and bound their width saturation to $2(d_{i-1} + 1)$ if a bias is used and $2d_{i-1}$ otherwise. This indicates that TSRAs \textit{may} be capable of exponential width saturation in deep neural network architectures, which enables their practical application in many modern architectures.

\section{Change-of-Basis Importance Concentration}

\begin{figure}[H]
    \centering
    \setlength{\fboxsep}{0pt}
    \includegraphics[width=\textwidth]{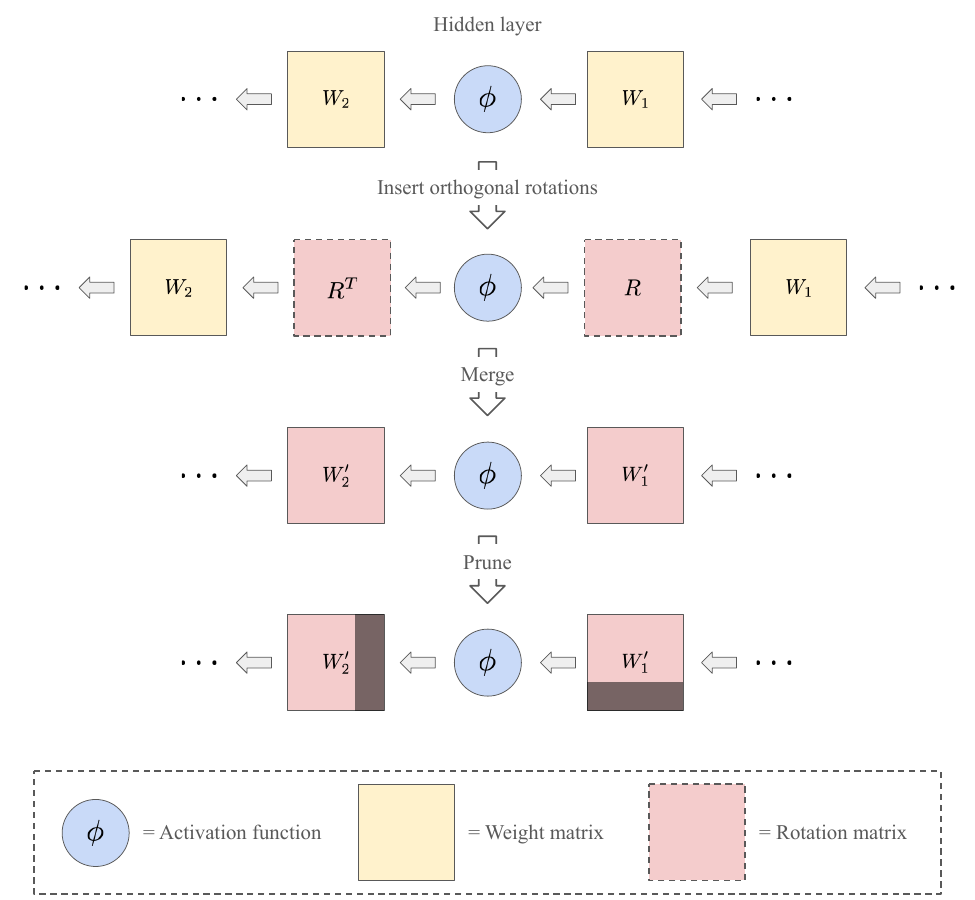}
    \caption{Diagram showing the insertion and merging of orthogonal linear transformations into the hidden layer of a fully connected neural network in order to concentrate importance into certain component axes. This is followed by pruning, which is represented by structured pruning of neurons in the diagram. Optional bias not shown for simplicity. Assumes that the activation function $\phi$ is invariant under the orthogonal linear transformation $R$.}
    \label{fig:cob_pruning}
\end{figure}

We introduce here the idea of inserting change-of-basis linear transformations into a neural network in order to concentrate importance on certain parameters and reduce importance in others. Figure~\ref{fig:cob_pruning} shows a diagram demonstrating this concept followed by pruning. While this idea is similar to the ones employed by SliceGPT~\citep{ashkboos2024slicegptcompresslargelanguage}, RotPruner~\citep{chen2025rotpruner}, and DenoiseRotator~\citep{gu2025denoiserotatorenhancepruningrobustness}, we discuss here their implementation in fully-connected neural networks (or convolutional neural networks) in a manner which does not leave the model with any additional parameters, whereas the aforementioned works were implemented in Transformer models and did leave the model with additional parameters.

Let $d_i$ be the output size of the $i$-th hidden layer of a fully connected neural network. Let $d_{i-1}$ be the output size of the previous layer and $d_{i+1}$ the output size of the following layer. Let $W_i \in \R^{d_i\times d_{i-1}}$ be the weight of the $i$-th hidden layer with optional bias $b_i \in \R^{d_i}$, and let $W_{i+1} \in \R^{d_{i+1}\times d_i}$ be the weight of the following layer with optional bias $b_{i+1} \in \R^{d_{i+1}}$.

The general approach to pruning begins by calculating importance scores for the parameters candidate for pruning. For example, in unstructured pruning this may be individual parameters, whereas in structured pruning this would be entire rows of $W_i$ and the corresponding columns of $W_{i+1}$ corresponding to individual neurons of hidden layer $i$. Then, the parameters or groups of parameters with the lowest importance scores would be pruned. Through a change-of-basis of the model weights, we aim to concentrate the importance scores onto certain parameters or parameter groups while reducing the importance of others, which should enable a greater amount of parameters or parameter groups to be pruned.

However, without rotational invariance of the activation function, this approach is not straightforward. Let $\phi$ be the activation function for hidden layer $i$ and let $x_\text{in} \in \R^{d_{i-1}}$ be the input to the hidden layer. Then we can express the forward pass on the $i$-th hidden layer as:

$$\text{hidden\_layer}_i(x_\text{in}) = \phi(W_ix_\text{in} + b_i)$$

Let $R \in \text{SO}(d_i)$ be an orthonormal rotation matrix and $R^T$ its inverse. Since $RR^T = I$, we can introduce the orthonormal rotation to the forward pass:

$$\text{hidden\_layer}_i(x_\text{in}) = \phi(\bm{R^TR}W_ix_\text{in} + \bm{R^TR}b_i)$$

We can merge $R$ into $W_i$ and $b_i$ to create $W_i' = RW_i$ and $b_i' = Rb_i$:

$$\text{hidden\_layer}_i(x_\text{in}) = \phi(R^T\bm{W_i'}x_\text{in} + R^T\bm{b_i'}) = \phi(R^T(W_i'x_\text{in} + b_i'))$$

This is equivalent to performing a change-of-basis on $W_i$ and $b_i$. Assuming $R$ is formulated to concentrate importance, we could now perform more aggressive pruning on $W_i'$ and $b_i'$. However, if $\phi$ is \textit{not} invariant to rotation, then we cannot remove $R^T$, and permanently \textit{gain} the additional computation, memory usage, and parameters associated with $R^T$. Given our objective of pruning, this is likely not desirable.

If $\phi$ is invariant to rotation however, then the following equivalency holds:

$$\text{hidden\_layer}_i(x_\text{in}) = \phi(R^T(W_i'x_\text{in} + b_i')) = R^T\phi(W_i'x_\text{in} + b_i')$$

Which allows us to factor $R^T$ into the weight matrix of the \textit{next} hidden layer:

\begin{align*}
    \text{hidden\_layer}_{i+1}&(x_\text{in})\\
    &=\phi(W_{i+1}\text{hidden\_layer}_i(x_\text{in}) + b_{i+1})\\
    &=\phi(\bm{W_{i+1}R^T}\phi(W_i'x_\text{in} + b_i') + b_{i+1})\\
    &=\phi(\bm{W_{i+1}'}\phi(W_i'x_\text{in} + b_i') + b_{i+1})\\
\end{align*}

Where $W_{i+1}' = W_{i+1}R^T$. This approach allows us to implement change-of-basis importance concentration with no additional compute or parameters required.

Of course, if $\phi$ was invariant to rotation, then it would be a radial rescaling activation (see Section~\ref{sec:tsras}). A TSRA is only invariant to rotation in its two separate subspaces $U$ and $V$. As such, with TSRAs, we must restrict $R$ to only those orthogonal linear transformations which perform \textit{separate} orthogonal linear transformations within the subspaces $U$ and $V$ while leaving both invariant.

\section{Methodology}

We make our code and pretrained model weights available at\\ \url{https://github.com/apning/change-of-basis-pruning}.

\subsection{Change-of-Basis Importance Concentration with PCA for Structured Pruning}

We note here that the concept discussed in this section is essentially the same as the one utilized by SliceGPT~\citep{ashkboos2024slicegptcompresslargelanguage}. Principal component analysis (PCA) is a commonly used linear dimensionality reduction technique which performs a linear transformation on a set of high dimensional data points $X \in \R^{d \times n}$ where $d$ is the dimensionality and $n$ the sample count by aligning the component axes with the \textit{principal components} that explain the most variance in descending order and then preserving the top $k \leq d$ components. That is, the \textit{first} axis will be aligned with the principal component of \textit{most} variance, the second axis with the principal component of second greatest variance, and so on until the $k$-th axis. Typically, the steps for performing PCA to reduce $X$ from $d$ to $k$ dimensions are as follows:

\begin{enumerate}
    \item Mean center $X$ to create $\tilde{X} = X - \bar{X}$.
    \item Compute the covariance matrix of the mean centered data $\Sigma = \frac{1}{n}\tilde{X}\tilde{X}^T \in \R^{d \times d}$.
    \item Find the eigenvectors and eigenvalues of $\Sigma$. Each eigenvector is a principal component and its corresponding eigenvalue represents the amount of variance explained by that principal component.
    \item The eigenvectors will be sorted by descending eigenvalue. Then, the top $k$ eigenvectors will be kept as the top $k$ principal components
    \item A linear transformation is constructed from the retained eigenvectors. This $k \times d$ matrix can then be used to project the data $X$ onto the first $k$ principal components.
\end{enumerate}

If $k = d$, then the linear transformation is orthogonal and simply performs a change-of-basis to align with the principal components ordered by maximal variance. However, if the data $X$ is \textit{not} mean-centered in step $1$, then the principal components will not maximize variance but rather the sum of squared projections along each principal component~\citep{jolliffe2016pca}. This is equivalent to maximizing the L2 norm of the projections along each principal axis, since maximizing the squared L2 norm is equivalent to maximizing the L2 norm (as the square root is a monotonic function). As a result, when using the L2-norm of each component axis of activations as our importance metric for structured pruning, we use this un-centered PCA approach to compute the change-of-basis transformation which directly maximizes the importance metric along components in descending order.

\subsection{Experiments}

Across the literature, the workflow for structured pruning is consistent at a high level. A dense model is first trained to convergence. Importance scores are then calculated for each neuron, filter, or dimension using a defined metric (e.g, L2 norm, variance, etc.). These units are then ranked and a pruning target is applied. Finally, the pruned model is fine-tuned to recover any lost accuracy. We adopt this recipe as well, building off influential pruning papers such as Li et al. (2017) \citep{li2017pruningfiltersefficientconvnets}, Liu et al. (2017) \citep{liu2017learningefficientconvolutionalnetworks}, Molchanov et al. (2019) \citep{molchanov2019importanceestimationneuralnetwork}, and He et al. (2017) \citep{he2017channelpruningacceleratingdeep}, which serve as a foundation for many recent works.

In this section, we describe the pruning benchmarks used to evaluate the effectiveness of our change-of-basis (CoB) method. Our goal is to compare standard structured pruning against CoB-augmented pruning under identical conditions. To isolate the effect of CoB, all pruning schedules rely on the same importance metric: activation-magnitude, while varying only the pruning schedule itself (e.g., fixed-ratio layerwise or threshold-based) as we discuss below. In order to calculate the importance score, we perform a forward pass on the model using random samples selected from the training set and capture the activations at each hidden layer. Then, the importance score for each component (dimension) is calculated via the L2-norm across the sampled activations.

We employ the VGG-16~\citep{simonyan2015deepconvolutionalnetworkslargescale} CNN model and CIFAR-10~\citep{Krizhevsky09learningmultiple} dataset commonly used for related pruning works. Because CIFAR-10 consists of $32 \times 32$ images, far smaller than the $224 \times 224$ images that VGG-16 was originally designed for, we greatly reduce the input features of the fully-connected classifier of VGG-16 from $25,088$ to $512$, which reduces initial parameter count from $134.3$M parameters to just $33.6$M parameters.

For our control, we employ VGG-16 using BatchNorm~\citep{ioffe2015batchnormalizationacceleratingdeep}, the ReLU (rectified linear unit) activation function, and Kaiming uniform weight initialization~\citep{he2015delvingdeeprectifierssurpassing}. For our CoB experiments, in order to ensure the essential property of rotational invariance within hidden layers as required by the change-of-basis operation, we replace the max-pooling operation in VGG16 with average-pooling since max-pooling does not satisfy rotational invariance. For the same reason, we utilize unlearned RMSNorm~\citep{zhang2019rootmeansquarelayer} instead of the more standard BatchNorm. Although a change-of-basis can be performed on the learned gain parameter for RMSNorm $g \in \R^d$, the resulting parameter $g' = QgQ^T$ would in general be a $d \times d$ square \textit{matrix} instead of a size $d$ column vector, potentially doubling the parameter count of the hidden layer. We have not investigated weight initialization for our TSRA activation function and use the Pytorch~\citep{paszke2019pytorchimperativestylehighperformance} default initializations.

Let $\R^d = U \oplus V$ be an orthogonal decomposition of the activation space into non-zero subspaces $U$ and $V$. Also, let $x = x_U + x_V$ be the decomposition of $x$ where $x_U \in U$ and $x_V \in V$. In creating an effective TSRA, we consider a construction which is both smooth and scale-free. Towards a scale-free construction, we use only the ratio $r = \frac{|x_U|}{|x|}$. Towards smoothness, we utilize the logistic function $\lambda(r)$ with coefficients $a, b \in \R$:

$$\lambda(r) = \frac{1}{1+e^{-a(r-b)}}$$

Using the TSRA formulation in Equation~\ref{eq:tsra}, we then set $f_U$ and $f_V$ as $\lambda$ with separate coefficients $a_U, b_U$ for $f_U$ and $a_V, b_V$ for $f_V$. For these results, we utilize $a_U, a_V = 5$, $b_U = 0.5$, and $b_V = 0.7$. Although arbitrary, we find that this combination yields reasonable results. As our objective is to explore change-of-basis pruning as a concept and not TSRAs specifically, we have not performed a thorough analysis of optimal values; it is very likely that there exist better coefficients.

We find that our ReLU-based control model trains better when using the SGD optimizer with Nesterov momentum~\citep{pmlr-v28-sutskever13}, while our TSRA model trains better when using the AdamW optimizer~\citep{loshchilov2019decoupledweightdecayregularization}. After training, for all pruning experiments involving TSRAs, we prune \textit{separately} within the two TSRA subspaces. We do this because the two subspaces may naturally develop different baseline importance levels even if both carrying equally significant representations, which could result in highly uneven pruning between the subspaces if the pruning were not performed separately. We prune both convolution and fully-connected layers.

\subsection{Layerwise Fixed-Ratio Pruning}
\label{subsec:layerwise_fixed_ratio_pruning}

Our first experimental setting follows the classical pruning paradigm introduced by Li et al. (2017) \citep{li2017pruningfiltersefficientconvnets}, in which each layer independently removes a fixed proportion of its hidden dimensions. This schedule serves as a standard baseline in structured pruning research because it provides a simple, architecture-agnostic way to control pruning percentage while preserving the relative width of each layer.

In this experiment, we apply a uniform pruning ratio to every prunable layer in the network, ranging from $10\%$ to $95\%$. For each layer, activation-magnitude scores are computed, ranked, and the lowest-scoring dimensions are removed. The resulting pruned model is then evaluated both before and after fine-tuning.

To assess the impact of change-of-basis rotations, we run each pruning percentage setting twice: once using standard layerwise pruning, and once where CoB is applied prior to importance-score computation. This pairing allows us to directly measure whether CoB yields more compression-tolerant representations under an identical pruning schedule and compare them both to a pretrained baseline as well. Overall, this experiment provides a controlled and recognized benchmark for quantifying the structural benefits introduced by CoB.

\subsection{Layerwise Threshold-Based Pruning}

Our second experimental setting evaluates structured pruning under threshold-based criteria, a family of methods that remove activation dimensions whose importance scores fall below a certain cutoff rather than enforcing a fixed per-layer pruning ratio. Thresholding methods are widely used in practice \citep{molchanov2019importanceestimationneuralnetwork} because they adapt to the statistical structure of each layer: layers with many low-importance units can be pruned more aggressively, while more ``information-dense'' layers naturally retain a larger fraction of their dimensions. This adaptivity makes thresholding a natural complement to the uniform-ratio schedule explored in layerwise fixed-ratio pruning.

For each rule, the threshold $T$ determines the aggressiveness of pruning; higher thresholds eliminate more units. Because these methods do not directly prescribe the final prune ratio, they can produce widely varying percentages of parameters pruned depending on the distribution of importance scores. This property makes them particularly informative for understanding how CoB reshapes importance statistics inside the network. We evaluate threshold-based pruning \emph{only} on CoB scenarios. Broadly, we study four thresholding rules:
\begin{enumerate}
    \item \textbf{Z-score cutoff}: Removes dimensions whose importance is more than $T$ standard deviations below the mean.
    \item \textbf{Proportion-of-average}. Prunes all dimensions $d$ where $\mathrm{importance}(d) < T \cdot \mathrm{avg}$
    \item \textbf{Proportion-of-median}: Prunes all dimensions $d$ where $\mathrm{importance}(d) < T \cdot \mathrm{median}$
    \item \textbf{Proportion-of-maximum}: Prunes all dimensions $d$ where $\mathrm{importance}(d) < T \cdot \mathrm{max}$
\end{enumerate}

Together, these experiments aim to characterize how CoB interacts with data-driven pruning rules, revealing which statistics are most aligned with CoB's geometry-preserving rotations, and identify regimes where CoB enables extreme compression with minimal performance hindrance.

\begin{table}[H]
\centering
\begin{tabular}{c c c c c c c}
\toprule
\textbf{Prune \%} &
\textbf{Params After} &
\textbf{ReLU-Base} &
\textbf{TSRA–Base} &
\textbf{TSRA–CoB} &
\textbf{$\Delta$ \begin{tabular}{c}
\end{tabular}} \\
\midrule
0 & 33.6M  & 94.49 & 89.97 & 89.97 & +0.00 \\
10 & 30.3M & 92.92 & 89.69 & 89.91 & +0.22\\
20 & 26.9M & 89.01 & 88.42 & 89.78 & +1.36\\
30 & 23.7M & 82.46 & 85.44 & 89.64 & +4.20\\
40 & 20.3M & 70.14 & 76.04 & 89.27 & +13.23  \\
50 & 16.9M & 50.52 & 58.41 & 88.99 & +30.58  \\
60 & 13.5M & 33.57 & 19.72 & 87.59 & +67.87  \\
70 & 10.2M & 18.55 & 10.00 & 85.09 & +75.09  \\
80 & 6.8M  & 14.41 & 10.00 & 72.50 & +62.50   \\
90 & 3.4M  & 10.68 & 10.00 & 28.72 & +18.72   \\
95 & 1.7M  & 10.00 & 10.00 & 13.94 & +3.94   \\
\bottomrule
\end{tabular}
\vspace{5mm}
\caption{\textbf{Pre-finetuning} accuracy on CIFAR-10 of three pruning settings: Baseline Pruning with ReLU, Baseline pruning with TSRA, and CoB Pruning with TSRA. All methods are evaluated across layerwise prune percentages. Post-pruning parameter counts are reported, and $\Delta$ denotes the accuracy difference (TSRA–CoB $-$ TSRA–Base).}
\label{tab:pre_ft-fixed-ratio}
\end{table}

\begin{table}[H]
\centering
\begin{tabular}{c c c c c c}
\toprule
\textbf{Prune \%} &
\textbf{Params After} &
\textbf{ReLU–Base} &
\textbf{TSRA–Base} &
\textbf{TSRA–CoB} &
\textbf{$\Delta$ \begin{tabular}{c}
\end{tabular}} \\
\midrule
0 & 33.6M &  94.49 & 89.97 & 89.97 & +0.00 \\
10 & 30.3M & 93.67 & 89.69 & 89.91 & +0.22 \\
20 & 26.9M & 93.48 & 89.46 & 89.78 & +0.32 \\
30 & 23.7M & 93.29 & 88.98 & 89.64 & +0.66 \\
40 & 20.3M & 93.05 & 88.51 & 89.32 & +0.81 \\
50 & 16.9M & 92.88 & 88.08 & 89.13 & +1.05 \\
60 & 13.5M & 92.61 & 87.07 & 88.85 & +1.78 \\
70 & 10.2M & 92.32 & 85.65 & 88.24 & +2.59 \\
80 & 6.8M  & 91.26 & 82.46 & 85.45 & +2.99 \\
90 & 3.4M  & 89.51 & 71.07 & 79.55 & +8.48 \\
95 & 1.7M  & 84.61 & 50.40 & 64.55 & +14.15\\
\bottomrule
\end{tabular}
\vspace{5mm}
\caption{\textbf{Post-finetuning} accuracy on CIFAR-10 of three pruning settings: Baseline Pruning with ReLU, Baseline pruning with TSRA, and CoB Pruning with TSRA. All methods are evaluated across layerwise prune percentages. Post-pruning parameter counts are reported, and $\Delta$ denotes the accuracy difference (TSRA–CoB $-$ TSRA–Base).}
\label{tab:post_ft-fixed-ratio}
\end{table}

\begin{figure}[H]
    \centering
    \setlength{\fboxsep}{0pt}
    \includegraphics[width=\textwidth]{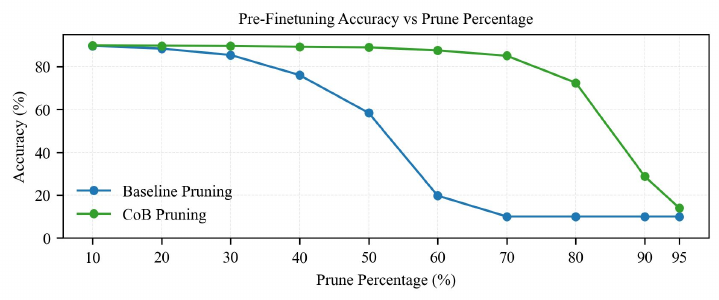}
    \caption{Graph of Pre-finetune accuracy of baseline vs. CoB across percentage of parameters pruned}\label{fig:Pre_FT_Acc_vs_Prune_Ratio}
\end{figure}

\section{Results and Discussion}
\subsection{Results for Layerwise Fixed-Ratio Pruning}

\begin{figure}[H]
    \centering
    \setlength{\fboxsep}{0pt}
    \includegraphics[width=\textwidth]{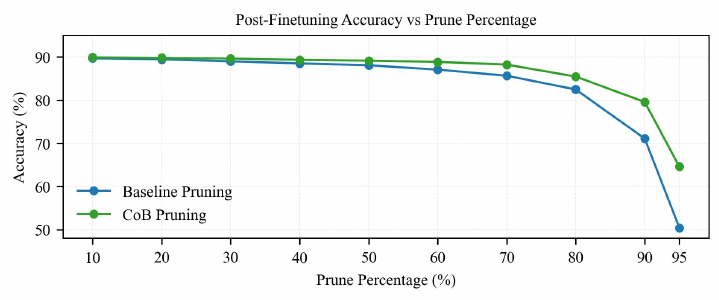}
    \caption{Graph of Post-finetune accuracy of baseline vs. CoB across percentage of parameters pruned}\label{fig:Post_FT_Acc_vs_Prune_Ratio}
\end{figure}

Table~\ref{tab:pre_ft-fixed-ratio} and Table~\ref{tab:post_ft-fixed-ratio} summarize the pre- and post-finetuning performance of baseline pruning versus CoB-augmented pruning across pruning ratios from 10\% to 95\%. Overall, the results reveal that CoB significantly improves robustness to structured pruning, yielding higher accuracy across all prune percentages. Before any finetuning, the baseline model degrades rapidly under layerwise pruning. Accuracy falls below 60\% at 50\% pruned and collapses to chance-level performance (10\%) by 70--80\% pruning. On the other hand, CoB maintains more than 85\% accuracy through 70\% pruning and still retains 72.5\% accuracy at 80\% pruning. This represents accuracy improvements of $+30.6$ to $+75.1$ percentage points relative to the baseline in the 50--80\% pruned regime. This pattern is visualized in Figure~\ref{fig:Pre_FT_Acc_vs_Prune_Ratio}, where the comparison of the degradation curves also indicate that CoB Pruning fundamentally increases the compressibility of the learned representation over the baseline, even before fine-tuning is applied.

After fine-tuning, both methods recover accuracy at lower pruning ratios, but CoB consistently outperforms the baseline across all settings. For moderate pruning (40--60\%), CoB yields improvements ranging from $+0.81$ to
$+1.78$ percentage points. These margins steadily widen at high prune levels. For instance, there exists a clear improvement with CoB over baseline at around $+2.59$ at 70\% pruning and $+2.99$ at 80\% pruning. In extreme compression regimes, the benefits become even more pronounced: at 90--95\% pruning, CoB improves final accuracy by $+8.48$ and $+14.15$ points respectively. The prevalence of this curve is visualized in Figure \ref{fig:Post_FT_Acc_vs_Prune_Ratio}. Together, these results show that CoB expands the feasible compression frontier of the
model. Whereas baseline pruning is only reliable up to roughly 50--60\% pruned, CoB enables accurate inference up to 70--80\% pruning before performance begins to sharply decline. This provides strong empirical evidence that change-of-basis transformations produce representations that are intrinsically more prune-friendly and structurally robust.

\subsection{Results for Layerwise Threshold-Based Pruning}

\begin{table}[H]
\begin{minipage}{0.48\linewidth}
\centering
\subcaptionbox{z-score based Threshold, where threshold T prunes all dimensions with importance z-score < T.\label{tab:zscore}}{
\begin{tabular}{c c c c}
\toprule
Threshold & Prune (\%) & Acc (\%) & $\Delta$ Acc Loss \\
\midrule
-0.75 & 5.89 & 89.56 & -0.44 \\
-0.50 & 23.98 & 89.00 & -1.0 \\
-0.25 & 42.27 & 88.40 & -1.6 \\
0.00  & 96.03 & 84.69 & -5.31 \\
0.25  & 97.28 & 72.16 & -17.84\\
0.50  & 98.10 & 54.32 & -35.68\\
\bottomrule
\end{tabular}
}
\end{minipage}
\hfill
\begin{minipage}{0.48\linewidth}
\centering
\subcaptionbox{Proportion-of-average based threshold; threshold T prunes dimensions with importance below T$\times$(layer mean).\label{tab:propavg}}{
\begin{tabular}{c c c c}
\toprule
Threshold & Prune (\%) & Acc (\%) & $\Delta$ Acc Loss \\
\midrule
0.65 & 93.78 & 87.74 & -2.26 \\
0.75 & 94.60 & 87.07 & -2.93 \\
0.85 & 95.20 & 86.85 & -3.15 \\
0.95 & 95.77 & 85.42 & -4.58 \\
1.00 & 96.01 & 84.47 & -5.53 \\
1.05 & 96.67 & 77.12 & -12.88 \\
\bottomrule
\end{tabular}
}
\end{minipage}

\vspace{1em}

\begin{minipage}{0.48\linewidth}
\centering
\subcaptionbox{Proportion-of-median based threshold; threshold T prunes dimensions with importance below T$\times$(layer median).\label{tab:propmed}}{
\begin{tabular}{c c c c}
\toprule
Threshold & Prune (\%) & Acc (\%) & $\Delta$ Acc Loss\\
\midrule
0.70 & 31.72 & 89.42 & -0.58 \\
0.85 & 52.82 & 88.77 & -1.23 \\
1.00 & 74.96 & 87.30 & -2.7 \\
1.15 & 82.88 & 84.76 & -5.24 \\
1.30 & 86.65 & 81.75 & -8.25 \\
1.45 & 88.86 & 75.93 & -14.07 \\
\bottomrule
\end{tabular}
}
\end{minipage}
\hfill
\begin{minipage}{0.48\linewidth}
\centering
\subcaptionbox{Proportion-of-maximum based threshold; threshold T prunes dimensions with importance below T$\times$(layer max).\label{tab:propmax}}{
\begin{tabular}{c c c c}
\toprule
Threshold & Prune (\%) & Acc (\%) & $\Delta$ Acc Loss \\
\midrule
0.05 & 91.10 & 89.20 & -0.80  \\
0.08 & 93.79 & 88.67 & -1.33 \\
0.11 & 95.33 & 87.57 & -2.43 \\
0.14 & 96.52 & 86.82 & -3.18 \\
0.17 & 97.35 & 83.84 & -6.16 \\
0.20 & 98.04 & 78.80 & -11.20 \\
\bottomrule
\end{tabular}
}
\end{minipage}
\caption{Comparison of CoB pruning performance across four thresholding strategies. Control (no pruning) accuracy was 89.97\%. For each threshold, we report observed prune percentage (fraction of parameters removed), post-finetuning accuracy, and accuracy drop relative to the 89.97\% control.} 

While fixed-ratio pruning provides a controlled benchmark for comparing baseline and CoB pruning, it does not adapt to the activation statistics of each layer. In contrast, threshold-based pruning is ideally positioned to test whether CoB meaningfully restructures activation geometry in ways that make sparsification more efficient. Across all four thresholding strategies: z-score cutoff, proportion-of-average, proportion-of-median, and proportion-of-maximum, we observe a striking and consistent trend: CoB enables the model to withstand aggressive pruning before accuracy collapses. In nearly 3 out of 4 cases, the threshold rules achieve a prune between \textbf{90\% and 96\% of the model’s parameters while maintaining post-finetuning accuracies in the 84--89\% range}, only 1--6 percentage points below the 89.97\% control. This constitutes a dramatic improvement over the layerwise fixed-ratio experiments, where pruning 95\% of parameters reduced CoB accuracy to 64.6\% even after finetuning.\\

For Z-Score thresholding, negative thresholds (e.g., $T=-0.75$ to $-0.25$) prune conservatively and show minimal accuracy loss ($<1.6$ points). At $T=0$, which removes all below-mean dimensions, CoB removes $96\%$ of parameters and still retains $84.7\%$ accuracy. Only for positive thresholds (e.g., $T=0.25$ to $0.50$) do we observe the expected sharp decline. On the other hand, the proportion-of-average strategy behaves more aggressively across layers, producing prune ratios between $94$--$97\%$. Despite this, CoB maintains strong performance, with accuracies between $84$--$88\%$ for thresholds up to $T=1.0$. Performance drops steeply only around $T=1.05$. Median-based thresholding produces a more gradual percent pruned ladder, with pruning increasing smoothly from $32\%$ to $89\%$ as $T$ increases from $0.7$ to $1.45$.  
However, this method yields the lowest overall compression efficiency of the four: accuracy falls below $80\%$ once pruning exceeds $\sim 85\%$. Proportion-of-maximum thresholding delivers the strongest results of all methods. Thresholds in the range $T=0.05$ to $0.14$ prune $91$--$96\%$ of parameters while preserving remarkably high accuracies between $86.8$--$89.2\%$. \textbf{Even at T=0.17 (removing 96.52\% of parameters), the model achieves 86.82\% accuracy, only a 3.18 point drop from the unpruned baseline.} This experiment provides empirical evidence that CoB supports extremely aggressive pruning without proportional accuracy loss.\\  

\label{tab:all_cob_pruning}
\end{table}
\begin{figure}[H]
    \centering
    \setlength{\fboxsep}{0pt}
    \includegraphics[width=\textwidth]{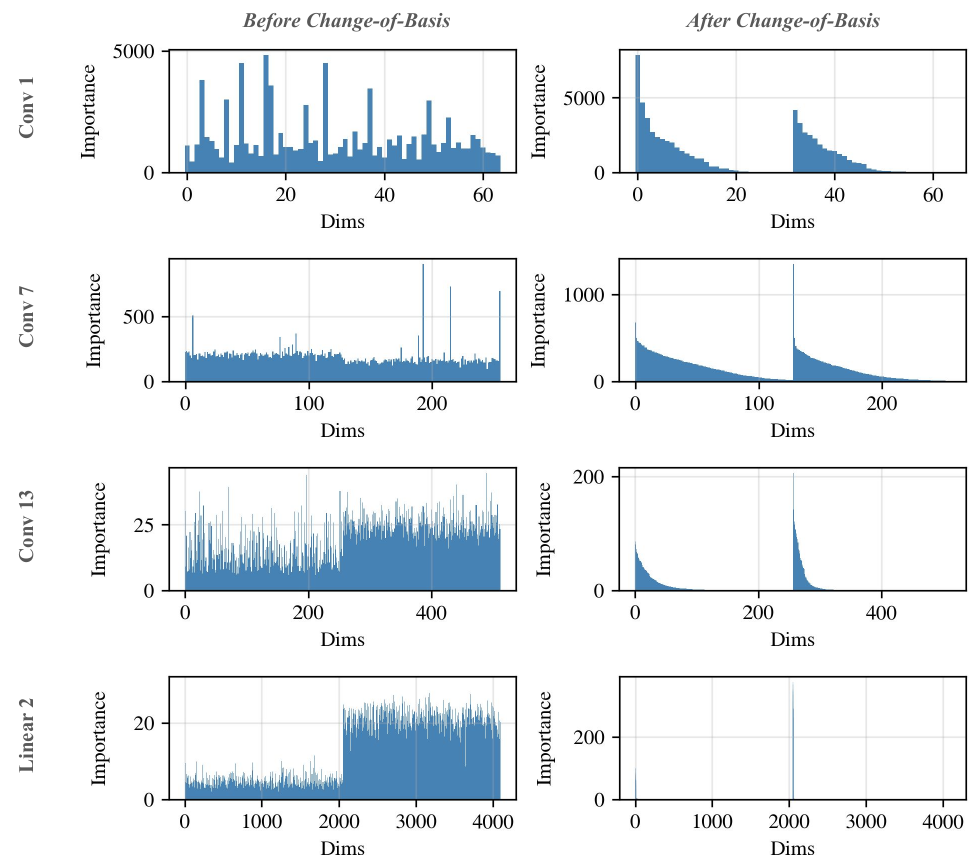}
    \caption{Importance scores for certain layers of VGG-16 before and after change-of-basis to concentrate importance. The two-subspace structure of TSRAs is evident in the separation of importance scores between the first half and second half of component dimensions. As the size of the hidden layers increase, there appears to be a trend of increasing kurtosis in the distribution of post-change-of-basis importance scores, indicating the model is able to concentrate importance into a very small number of the many dimensions.}
    \label{fig:pre_post_cob_importance_scores}
\end{figure}

As shown in Figure \ref{fig:pre_post_cob_importance_scores}, after the change-of-basis transformation, each layer’s importance scores become highly concentrated in a small number of dimensions, in contrast to the diffuse, irregular distributions seen in the baseline model. This concentration effect enables threshold-based pruning to operate so effectively. Collectively, these results indicate that CoB improves pruning robustness and reshapes the model's activation distribution, enabling higher pruning ratios with the tested methods.

\section{Conclusion and Future Work}

In this work, we study change-of-basis (CoB) pruning as a principled approach to structured model compression, enabled by our introduction of Two-Subspace Radial Activations (TSRAs) that admit rotational invariance within each of their subspaces. We provide a proof showing that TSRAs are not subject to the same width saturation limitations as the more generally rotationally invariant radial rescaling activations and prove an upper bound on their width saturation capabilities. This invariance of TSRAs and additional modifications we make allows orthogonal linear transformations to be cleanly merged into neighboring weights, enabling activation-aligned representations that are more compressible than those originally learned. Our experiments on VGG-16 demonstrate that CoB consistently improves pruning robustness across both fixed-ratio and threshold-based schedules, extending the reliable pruning frontier from $\sim$ 30\% to nearly 70\% of parameters removed on pre-finetuning layerwise fixed pruning experiments, and allowing 90–96\% compression under thresholds with only modest accuracy loss. Our results serve as a promising proof-of-concept into change-of-basis pruning enabled by rotational invariance. While TSRAs provide the necessary geometric structure for CoB, our analysis only demonstrates an upper bound on their width saturation properties. Future work may provide a more complete mathematical characterization, demonstrating lower bounds under certain conditions or tightening the upper bound. Additionally, future work may also focus on exploring other activation functions within the TSRA family, investigating other activation families with rotational invariance properties, developing weight initialization schemes for any of these activation functions, or exploring alternative importance metrics or pruning schemes (eg. unstructured pruning). Together, these directions may further expand the viability of change-of-basis pruning methods enabled by rotational invariance.

\section{Acknowledgments} 

We sincerely thank Professor Yen-Ling Kuo at the University of Virginia for supporting and advising our research endeavors in general and for providing access to the computational resources used for this work. Furthermore, we thank the University of Virginia Research Computing and the Department of Computer Science for providing the high-performance computing systems that made this research possible.

\bibliographystyle{unsrt}  
\bibliography{references}

@misc{ganev2023universalapproximationmodelcompression,
      title={Universal approximation and model compression for radial neural networks}, 
      author={Iordan Ganev and Twan van Laarhoven and Robin Walters},
      year={2023},
      eprint={2107.02550},
      archivePrefix={arXiv},
      primaryClass={cs.LG},
      url={https://arxiv.org/abs/2107.02550}, 
}

@misc{li2017pruningfiltersefficientconvnets,
      title={Pruning Filters for Efficient ConvNets}, 
      author={Hao Li and Asim Kadav and Igor Durdanovic and Hanan Samet and Hans Peter Graf},
      year={2017},
      eprint={1608.08710},
      archivePrefix={arXiv},
      primaryClass={cs.CV},
      url={https://arxiv.org/abs/1608.08710}, 
}

@misc{liu2017learningefficientconvolutionalnetworks,
      title={Learning Efficient Convolutional Networks through Network Slimming}, 
      author={Zhuang Liu and Jianguo Li and Zhiqiang Shen and Gao Huang and Shoumeng Yan and Changshui Zhang},
      year={2017},
      eprint={1708.06519},
      archivePrefix={arXiv},
      primaryClass={cs.CV},
      url={https://arxiv.org/abs/1708.06519}, 
}

@misc{molchanov2019importanceestimationneuralnetwork,
      title={Importance Estimation for Neural Network Pruning}, 
      author={Pavlo Molchanov and Arun Mallya and Stephen Tyree and Iuri Frosio and Jan Kautz},
      year={2019},
      eprint={1906.10771},
      archivePrefix={arXiv},
      primaryClass={cs.LG},
      url={https://arxiv.org/abs/1906.10771}, 
}

@misc{he2017channelpruningacceleratingdeep,
      title={Channel Pruning for Accelerating Very Deep Neural Networks}, 
      author={Yihui He and Xiangyu Zhang and Jian Sun},
      year={2017},
      eprint={1707.06168},
      archivePrefix={arXiv},
      primaryClass={cs.CV},
      url={https://arxiv.org/abs/1707.06168}, 
}

@article{jolliffe2016pca,
    author = {Jolliffe, Ian T.  and Cadima, Jorge },
    title = {Principal component analysis: a review and recent developments},
    journal = {Philosophical Transactions of the Royal Society A: Mathematical, Physical and Engineering Sciences},
    volume = {374},
    number = {2065},
    pages = {20150202},
    year = {2016},
    doi = {10.1098/rsta.2015.0202},
    URL = {https://royalsocietypublishing.org/doi/abs/10.1098/rsta.2015.0202},
    eprint = {https://royalsocietypublishing.org/doi/pdf/10.1098/rsta.2015.0202},
    abstract = { Large datasets are increasingly common and are often difficult to interpret. Principal component analysis (PCA) is a technique for reducing the dimensionality of such datasets, increasing interpretability but at the same time minimizing information loss. It does so by creating new uncorrelated variables that successively maximize variance. Finding such new variables, the principal components, reduces to solving an eigenvalue/eigenvector problem, and the new variables are defined by the dataset at hand, not a priori, hence making PCA an adaptive data analysis technique. It is adaptive in another sense too, since variants of the technique have been developed that are tailored to various different data types and structures. This article will begin by introducing the basic ideas of PCA, discussing what it can and cannot do. It will then describe some variants of PCA and their application. }
}

@misc{simonyan2015deepconvolutionalnetworkslargescale,
      title={Very Deep Convolutional Networks for Large-Scale Image Recognition}, 
      author={Karen Simonyan and Andrew Zisserman},
      year={2015},
      eprint={1409.1556},
      archivePrefix={arXiv},
      primaryClass={cs.CV},
      url={https://arxiv.org/abs/1409.1556}, 
}

@TECHREPORT{Krizhevsky09learningmultiple,
    author = {Alex Krizhevsky},
    title = {Learning multiple layers of features from tiny images},
    institution={},
    year = {2009},
    url={https://www.cs.toronto.edu/~kriz/learning-features-2009-TR.pdf},
}

@misc{zhang2019rootmeansquarelayer,
      title={Root Mean Square Layer Normalization}, 
      author={Biao Zhang and Rico Sennrich},
      year={2019},
      eprint={1910.07467},
      archivePrefix={arXiv},
      primaryClass={cs.LG},
      url={https://arxiv.org/abs/1910.07467}, 
}

@misc{ioffe2015batchnormalizationacceleratingdeep,
      title={Batch Normalization: Accelerating Deep Network Training by Reducing Internal Covariate Shift}, 
      author={Sergey Ioffe and Christian Szegedy},
      year={2015},
      eprint={1502.03167},
      archivePrefix={arXiv},
      primaryClass={cs.LG},
      url={https://arxiv.org/abs/1502.03167}, 
}

@misc{vaswani2023attentionneed,
      title={Attention Is All You Need}, 
      author={Ashish Vaswani and Noam Shazeer and Niki Parmar and Jakob Uszkoreit and Llion Jones and Aidan N. Gomez and Lukasz Kaiser and Illia Polosukhin},
      year={2023},
      eprint={1706.03762},
      archivePrefix={arXiv},
      primaryClass={cs.CL},
      url={https://arxiv.org/abs/1706.03762}, 
}

@misc{ashkboos2024slicegptcompresslargelanguage,
      title={SliceGPT: Compress Large Language Models by Deleting Rows and Columns}, 
      author={Saleh Ashkboos and Maximilian L. Croci and Marcelo Gennari do Nascimento and Torsten Hoefler and James Hensman},
      year={2024},
      eprint={2401.15024},
      archivePrefix={arXiv},
      primaryClass={cs.LG},
      url={https://arxiv.org/abs/2401.15024}, 
}

@misc{gu2025denoiserotatorenhancepruningrobustness,
      title={DenoiseRotator: Enhance Pruning Robustness for LLMs via Importance Concentration}, 
      author={Tianteng Gu and Bei Liu and Bo Xiao and Ke Zeng and Jiacheng Liu and Yanmin Qian},
      year={2025},
      eprint={2505.23049},
      archivePrefix={arXiv},
      primaryClass={cs.LG},
      url={https://arxiv.org/abs/2505.23049}, 
}

@misc{chen2025rotpruner,
      title={RotPruner: Large Language Model Pruning in Rotated Space}, 
      author={Haoxian Chen and Limin Wang},
      year={2025},
      url={https://openreview.net/forum?id=wV9iMiyQcc}, 
}

@article{He_2024,
   title={Structured Pruning for Deep Convolutional Neural Networks: A Survey},
   volume={46},
   ISSN={1939-3539},
   url={http://dx.doi.org/10.1109/TPAMI.2023.3334614},
   DOI={10.1109/tpami.2023.3334614},
   number={5},
   journal={IEEE Transactions on Pattern Analysis and Machine Intelligence},
   publisher={Institute of Electrical and Electronics Engineers (IEEE)},
   author={He, Yang and Xiao, Lingao},
   year={2024},
   month=may, pages={2900–2919} }

@misc{he2015delvingdeeprectifierssurpassing,
      title={Delving Deep into Rectifiers: Surpassing Human-Level Performance on ImageNet Classification}, 
      author={Kaiming He and Xiangyu Zhang and Shaoqing Ren and Jian Sun},
      year={2015},
      eprint={1502.01852},
      archivePrefix={arXiv},
      primaryClass={cs.CV},
      url={https://arxiv.org/abs/1502.01852}, 
}

@misc{paszke2019pytorchimperativestylehighperformance,
      title={PyTorch: An Imperative Style, High-Performance Deep Learning Library}, 
      author={Adam Paszke and Sam Gross and Francisco Massa and Adam Lerer and James Bradbury and Gregory Chanan and Trevor Killeen and Zeming Lin and Natalia Gimelshein and Luca Antiga and Alban Desmaison and Andreas Köpf and Edward Yang and Zach DeVito and Martin Raison and Alykhan Tejani and Sasank Chilamkurthy and Benoit Steiner and Lu Fang and Junjie Bai and Soumith Chintala},
      year={2019},
      eprint={1912.01703},
      archivePrefix={arXiv},
      primaryClass={cs.LG},
      url={https://arxiv.org/abs/1912.01703}, 
}

@InProceedings{pmlr-v28-sutskever13,
  title = 	 {On the importance of initialization and momentum in deep learning},
  author = 	 {Sutskever, Ilya and Martens, James and Dahl, George and Hinton, Geoffrey},
  booktitle = 	 {Proceedings of the 30th International Conference on Machine Learning},
  pages = 	 {1139--1147},
  year = 	 {2013},
  editor = 	 {Dasgupta, Sanjoy and McAllester, David},
  volume = 	 {28},
  number =       {3},
  series = 	 {Proceedings of Machine Learning Research},
  address = 	 {Atlanta, Georgia, USA},
  month = 	 {17--19 Jun},
  publisher =    {PMLR},
  pdf = 	 {http://proceedings.mlr.press/v28/sutskever13.pdf},
  url = 	 {https://proceedings.mlr.press/v28/sutskever13.html},
  abstract = 	 {Deep and recurrent neural networks (DNNs and RNNs respectively) are powerful models that were considered to be almost impossible to train using stochastic gradient descent with momentum. In this paper, we show that when stochastic gradient descent with momentum uses a well-designed random initialization and a particular type of slowly increasing schedule for the momentum parameter, it can train both DNNs and RNNs (on datasets with long-term dependencies) to levels of performance that were previously achievable only with Hessian-Free optimization. We find that both the initialization and the momentum are crucial since poorly initialized networks cannot be trained with momentum and well-initialized networks perform markedly worse when the momentum is absent or poorly tuned.     Our success training these models suggests that previous attempts to train deep and recurrent neural networks from random initializations have likely failed due to poor initialization schemes. Furthermore, carefully tuned momentum methods suffice for dealing with the curvature issues in deep and recurrent network training objectives without the need for sophisticated second-order methods.   }
}

@misc{loshchilov2019decoupledweightdecayregularization,
      title={Decoupled Weight Decay Regularization}, 
      author={Ilya Loshchilov and Frank Hutter},
      year={2019},
      eprint={1711.05101},
      archivePrefix={arXiv},
      primaryClass={cs.LG},
      url={https://arxiv.org/abs/1711.05101}, 
}

\appendix

\section{Width Saturation of Two-Subspace Rescaling Activations}
\label{sec:tsra_width_saturation_proof}

We characterize the width saturation properties of TSRAs by showing that they are generally strictly capable of greater width utilization compared to radial rescaling activations and provide an upper bound on their width saturation.

\subsection{TSRAs Do Not Preserve All Linear Subspaces}

Let $\R^d = U \oplus V$ be an orthogonal decomposition where $U$ and $V$ are non-zero subspaces as well as $x_U \in U$ and $x_V \in V$ be the decomposition of $x\in \R^d$. Denote functions $f_U, f_V: \R^2 \to \R$ where there exists scalars $a, b > 0$ such that $f_U(a,b) \neq f_V(a,b)$ and the corresponding TSRA $\sigma(x) = f_U(|x_U|,|x_V|)x_U + f_V(|x_U|,|x_V|)x_V$. Additionally, let $\phi(x)$ be a radial rescaling function.

As demonstrated in section~\ref{sec:tsras}, for any vector $x$, $\phi(x) \in \spa(\{x\})$. That is, $\phi(x)$ will still sit on the line spanned by $x$, since the radial rescaling function can only rescale the vector but not change its direction. As a result, given a subspace $S \subset \R^d$ where $\dim(S) < d$, $\phi(S) \subseteq S$ which implies that $\dim(\phi(S)) \leq \dim(S) < d$. Thus, the radial rescaling function $\phi$ is \textit{not} capable of utilizing any of the additional dimensionality provided by the higher-dimensional space $\R^d$.

\begin{lemma}
    Given $\sigma$ as defined above, there exists a $1$-dimensional subspace $L \subset \R^d$ such that $\sigma(L) \not\subseteq L$. As a result, TSRAs do not preserve all linear subspaces.
\end{lemma}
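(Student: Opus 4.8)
The plan is to exploit the fact that a one-dimensional subspace $L$ is preserved by $\sigma$ precisely when $\sigma$ maps a generator of $L$ to a parallel vector, and then to build a generator whose two subspace components get rescaled by \emph{different} factors, thereby breaking parallelism.

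First I would rephrase the goal. Writing $L = \spa(\{x\})$ for a nonzero $x \in \R^d$, the condition $\sigma(L) \subseteq L$ forces $\sigma(x) \in \spa(\{x\})$, i.e.\ $\sigma(x) = \lambda x$ for some $\lambda \in \R$ (if the generator maps into $L$, then by homogeneity so does every scalar multiple, so checking the generator suffices). Hence it is enough to exhibit a single $x$ for which $\sigma(x)$ is \emph{not} a scalar multiple of $x$.

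Next I would use the hypothesis directly. By assumption there exist scalars $a, b > 0$ with $f_U(a,b) \neq f_V(a,b)$. Since $U$ and $V$ are nonzero subspaces, I can choose $x_U \in U$ with $|x_U| = a$ and $x_V \in V$ with $|x_V| = b$, and set $x = x_U + x_V$ and $L = \spa(\{x\})$. Because $a, b > 0$, both components are nonzero; because the decomposition $U \oplus V$ is orthogonal, $x_U$ and $x_V$ are orthogonal and therefore linearly independent. Evaluating the activation at $x$ then gives $\sigma(x) = f_U(a,b)\,x_U + f_V(a,b)\,x_V$.

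Finally I would argue non-parallelism. Suppose for contradiction that $\sigma(x) = \lambda x = \lambda x_U + \lambda x_V$. Matching components against the independent vectors $x_U$ and $x_V$ forces $f_U(a,b) = \lambda = f_V(a,b)$, contradicting $f_U(a,b) \neq f_V(a,b)$. Therefore $\sigma(x) \notin L$, so $\sigma(L) \not\subseteq L$, as claimed. The only step requiring care is converting ``$\sigma(x) = \lambda x$'' into equations on the two scaling factors: this relies on the uniqueness of the $U \oplus V$ decomposition (equivalently, on linear independence of the two nonzero orthogonal components), which is exactly why I insist that both $a$ and $b$ be strictly positive. Beyond that bookkeeping there is no genuine obstacle; the result follows almost immediately once the generator is chosen so that the prescribed norms $a$ and $b$ land in the two subspaces.
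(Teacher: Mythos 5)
Your proof is correct and takes essentially the same route as the paper: pick $x = x_U + x_V$ with $|x_U| = a$, $|x_V| = b$ so that $f_U(a,b) \neq f_V(a,b)$, assume $\sigma(x) = \lambda x$, and use orthogonality (hence linear independence) of $x_U, x_V$ to force $f_U(a,b) = \lambda = f_V(a,b)$, a contradiction. One small caveat: your parenthetical homogeneity remark is not literally true for TSRAs (the rescaling factors change with the norms, so $\sigma(cx)$ need not be parallel to $\sigma(x)$), but it is also unnecessary, since the direction you actually use --- $\sigma(L) \subseteq L$ implies $\sigma(x) \in L$ --- is immediate.
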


\begin{proof}

Using the definitions above, set $x$ such that $x_U, x_V \neq 0$ and $f_U(|x_U|, |x_V|) \neq f_V(|x_U|, |x_V|)$ which is possible given the definitions of $f_U$ and $f_V$. Then

$$\sigma(x) = f_U(|x_U|,|x_V|)x_U + f_V(|x_U|,|x_V|)x_V$$

If $L = \spa (\{x\})$ were invariant under $\sigma$, then we would have $\sigma(x) \in L$, which implies that there exists some scalar $\lambda$ such that

$$\sigma(x) = f_U(|x_U|,|x_V|)x_U + f_V(|x_U|,|x_V|)x_V = \lambda x = \lambda(x_U + x_V) = \lambda x_U + \lambda x_V$$

Abbreviating $f_U = f_U(|x_U|,|x_V|)$ and $f_V = f_V(|x_U|,|x_V|)$, we have:

$$f_Ux_U + f_Vx_V = \lambda x_U + \lambda x_V$$

Rearranging

$$(f_U-\lambda)x_U + (f_V - \lambda)x_V = 0$$

However, since $x_U \in U$, $x_V \in V$, and $U \perp V$ with $x_U, x_V \neq 0$, this implies that $f_U = \lambda$ and $f_V = \lambda$ and thus $f_U = f_V$. But this contradicts $f_U \neq f_V$.

As a result, we conclude that $L$ is not invariant under $\sigma$, and thus that TSRAs do not preserve all linear subspaces.

\end{proof}

By demonstrating that TSRAs do not preserve all linear subspaces, we show that it has strictly \textit{greater} width saturation than radial rescaling activations. Next, we prove an upper bound for the width saturation of TSRAs.

\subsection{TSRA Width Saturation Upper Bound}

Keep the previous definitions for $\sigma$, $f_U$, and $f_V$. Let $d_i$ be the size of the $i$-th hidden layer and $d_{i-1}$ the size of the hidden layer before it. Let $W \in \R^{d_i \times d_{i-1}}$ be the weight of the $i$-th hidden layer and $b \in \R^{d_i}$ its optional bias. Note that for the purpose of conciseness, in all following notation the addition of $b$ will be used alongside the weight, \textit{however}, it should be interpreted as \textit{optional}. Denote the orthogonal decomposition $\R^{d_i} = U \oplus V$. Let $x_\text{in} \in \R^{d_{i-1}}$ represent the input vector to the hidden layer, $x_\text{pre} = Wx_\text{in} + b$ be the pre-activation, and $x_\text{act} = \sigma(x_\text{pre})$ the activation. Also denote $I$ the set of all possible $x_\text{in}$, $P$ the set of all possible $x_\text{pre}$, and $S$ as the \textit{span} of all possible $x_\text{act}$. Ie. $P = WI + b$ and $S = \spa(\sigma(P))$.

\begin{lemma}\label{lemma:tsra_upper_bound}
    Using the above definitions the upper bound for $\dim(S)$ is $2(d_{i-1} + 1)$ in the case of a bias and $2d_{i-1}$ in the case of no bias.
\end{lemma}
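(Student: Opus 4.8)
The plan is to exploit the one structural feature that distinguishes a TSRA from a general map: on each input it rescales the $U$-component and the $V$-component by two \emph{separate} scalars, so the output always lies in the direct sum of a line through the $U$-projection and a line through the $V$-projection of that input. Consequently the activation span $S$ cannot be larger than the span of the $U$-projections of all pre-activations together with the span of their $V$-projections, and each of those is controlled by the rank of $W$.

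First I would locate the pre-activations. Since $x_\text{pre} = Wx_\text{in} + b$, the set $P = WI + b$ is contained in the affine set $\text{im}(W) + b$, where $\dim(\text{im}(W)) \le d_{i-1}$. Writing $\pi_U, \pi_V$ for the orthogonal projections onto $U$ and $V$, every $U$-part $\pi_U(x_\text{pre})$ lies in the affine set $A_U := \pi_U(\text{im}(W)) + \pi_U(b)$, and every $V$-part in $A_V := \pi_V(\text{im}(W)) + \pi_V(b)$, simply because $\pi_U(Wx_\text{in}) \in \pi_U(\text{im}(W))$ and projection is linear.

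Next I would bound $\dim(S)$. By the definition of $\sigma$, for each $x_\text{pre} \in P$ the vector $\sigma(x_\text{pre})$ is a scalar multiple of $\pi_U(x_\text{pre}) \in A_U$ plus a scalar multiple of $\pi_V(x_\text{pre}) \in A_V$; hence its $U$-summand lies in $\spa(A_U)$ and its $V$-summand in $\spa(A_V)$. Because $U \perp V$ this decomposition is direct, so $\sigma(P) \subseteq \spa(A_U) \oplus \spa(A_V)$ and therefore $S \subseteq \spa(A_U) \oplus \spa(A_V)$, giving $\dim(S) \le \dim(\spa(A_U)) + \dim(\spa(A_V))$. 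The problem then reduces to bounding the span of each affine set. A projection cannot increase dimension, so each linear part $\pi_U(\text{im}(W))$ has dimension at most $\dim(\text{im}(W)) \le d_{i-1}$. In the no-bias case the offsets $\pi_U(b), \pi_V(b)$ vanish, the sets $A_U, A_V$ are already linear, and $\dim(S) \le 2d_{i-1}$. With a bias I would invoke the fact that $\spa(L + v) = L + \spa(\{v\})$ has dimension at most $\dim(L) + 1$, so that $\dim(\spa(A_U)) \le d_{i-1} + 1$ and likewise for $V$, yielding $\dim(S) \le 2(d_{i-1}+1)$.

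I expect the main subtlety to be the bookkeeping introduced by the bias: one must carefully distinguish the affine sets $A_U, A_V$ from their linear spans and justify only a \emph{single} extra dimension per subspace, rather than conflating the offset $\pi_U(b)$ with the projected column space $\pi_U(\text{im}(W))$. The remaining ingredients — that $\sigma$ keeps each subspace component on its own ray, and that orthogonality of $U$ and $V$ makes the decomposition direct — follow immediately from the definition of the TSRA and the ray-preserving observation already used for radial rescaling activations in Section~\ref{sec:tsras}.
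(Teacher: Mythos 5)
Your proposal is correct and follows essentially the same route as the paper's proof: both use the orthogonal projections onto $U$ and $V$ to show each activation lies in the span of the two projected pre-activation components, bound $\dim(S)$ by the sum of the two projected spans' dimensions, and charge at most one extra dimension per subspace to the projected bias. Your explicit treatment of the affine sets $A_U, A_V$ and the identity $\spa(L+v)=L+\spa(\{v\})$ is just a slightly more careful phrasing of the paper's step $\spa(Q_UP)=\spa(Q_UWI\cup\{Q_Ub\})$, not a different argument.
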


\begin{proof}
    Denote $Q_U$ as the orthogonal projection from $\R^{d_i}$ to $U$ such that $x_U = Q_Ux$. Similarly, denote $Q_V$ as the orthogonal projection from $\R^{d_i}$ to $V$ such that $x_V = Q_Vx$. Abbreviate $f_U = f_U(|x_{\text{pre,}U}|,|x_{\text{pre,}V}|)$ and $f_V = f_V(|x_{\text{pre,}U}|,|x_{\text{pre,}V}|)$. Then, we can obtain the activations via the equation:

    $$x_\text{act} = f_UQ_Ux_\text{pre} + f_VQ_Vx_\text{pre}$$

    This shows that $x_\text{act}$ always lies on the two-dimensional plane spanned by $Q_Ux_\text{pre}$ and $Q_Vx_\text{pre}$. As a result, the span of all activations, $S$, must belong within the combined span of all possible $Q_Ux_\text{pre}$ and $Q_Vx_\text{pre}$:

    $$S \subseteq \spa(Q_UP \cup Q_VP)$$

    Since $Q_UP \subseteq U$ and $Q_VP \subseteq V$, and $U \perp V$, this span is just the direct sum:

    $$S \subseteq \spa(Q_UP \cup Q_VP) = \spa(Q_UP) \oplus \spa(Q_VP)$$
    
    Which implies

    $$\dim(S) \leq \dim(\spa(Q_UP)) + \dim(\spa(Q_VP))$$

    $P = WI + b$ and $\dim(\spa(I)) \leq d_{i-1}$ by definition. As a result, $\dim(\spa(Q_UP)), \dim(\spa(Q_VP)) \leq d_{i-1}$ in the case no bias is used. If a bias is used, $\spa(Q_UP) = \spa(Q_UWI \cup \{Q_Ub\})$ and $\spa(Q_VP) = \spa(Q_VWI \cup \{Q_Vb\})$ and thus $\dim(\spa(Q_UP)), \dim(\spa(Q_VP)) \leq d_{i-1} + 1$. As a result, in the case of bias we can conclude:

    $$\dim(S) \leq 2(d_{i-1} + 1)$$

    And in the case of no bias:

    $$\dim(S) \leq 2d_{i-1}$$
\end{proof}

Through Lemma~\ref{lemma:tsra_upper_bound}, we have shown that TSRAs have an upper bound for maximum width saturation at double the dimensionality of the previous hidden layer's size, plus $2$ if a bias is used. As a result, TSRAs may be capable of exponential increase of width saturation through many hidden layers, which would enable practical utilization of high-width hidden layers in many deep architectures.

\end{document}